\def\ps@pprintTitle{%
 \let\@oddhead\@empty
 \let\@evenhead\@empty
 \def\@oddfoot{\centerline{\thepage}}%
 \let\@evenfoot\@oddfoot}
\newcommand*\patchAmsMathEnvironmentForLineno[1]{%
  \expandafter\let\csname old#1\expandafter\endcsname\csname #1\endcsname
  \expandafter\let\csname oldend#1\expandafter\endcsname\csname end#1\endcsname
  \renewenvironment{#1}%
     {\linenomath\csname old#1\endcsname}%
     {\csname oldend#1\endcsname\endlinenomath}}%
\newcommand*\patchBothAmsMathEnvironmentsForLineno[1]{%
  \patchAmsMathEnvironmentForLineno{#1}%
  \patchAmsMathEnvironmentForLineno{#1*}}%
\newcolumntype{L}[1]{>{\raggedright\let\newline\\\arraybackslash\hspace{0pt}}m{#1}}
\newcolumntype{C}[1]{>{\centering\let\newline\\\arraybackslash\hspace{0pt}}m{#1}}
\newcolumntype{R}[1]{>{\raggedleft\let\newline\\\arraybackslash\hspace{0pt}}m{#1}}
\newcounter{thm_counter}
\newtheorem{lemma}[thm_counter]{Lemma}
\def\1{{\bf{1}}}
\def\0{{\bf{0}}}
\def\w{{\bf w}}
\def\u{{\bf u}}
\def\v{{\bf v}}
\def\x{{\bf x}}
\def\y{{\bf y}}
\newcommand{\beq}{\begin{equation}}
\newcommand{\eeq}{\end{equation}}
\newcommand{\R}{\mathbb{R}}
\def\revise#1{\textcolor{black}{#1}}
\begin{document}

\begin{frontmatter}


\title{Prognostics of Surgical Site Infections using Dynamic Health Data}



\author[add1]{Chuyang Ke}
\author[add2]{Yan Jin}
\author[add3]{Heather Evans}
\author[add4]{Bill Lober}
\author[add5]{Xiaoning Qian}
\author[add1]{Ji Liu}
\author[add2]{Shuai Huang}

\address[add1]{Department of Computer Science, University of Rochester}
\address[add2]{Department of Industrial $\&$ Systems Engineering, University of Washington}
\address[add3]{Department of Surgery, University of Washington}
\address[add4]{Department of Biomedical Informatics and Medical Education, University of Washington}
\address[add5]{Department of Electrical $\&$ Computer Engineering, Texas A$\&$M University}

\begin{abstract}
Surgical Site Infection (SSI) is a national priority in healthcare research. Much research attention has been attracted to develop better SSI risk prediction models. However, most of the existing SSI risk prediction models are built on static risk factors such as comorbidities and operative factors. In this paper, we investigate the use of the dynamic wound data for SSI risk prediction. There have been emerging mobile health (mHealth) tools that can closely monitor the patients and generate continuous measurements of many wound-related variables and other evolving clinical variables. Since existing prediction models of SSI have quite limited capacity to utilize the evolving clinical data, we develop the corresponding solution to equip these mHealth tools with decision-making capabilities for SSI prediction with a seamless assembly of several machine learning models to tackle the \revise{analytic challenges arising from the spatial-temporal data. The basic idea is to exploit the low-rank property of the spatial-temporal data via the bilinear formulation, and further enhance it with automatic missing data imputation by the matrix completion technique.} We derive efficient optimization algorithms to implement these models and demonstrate \revise{the superior performances of our new predictive model on a real-world dataset of SSI, compared to} a range of state-of-the-art methods.
\end{abstract}

\begin{keyword}
Surgical Site Infections~(SSI) \sep Machine Learning \sep mHealth \sep Risk Prediction


\end{keyword}

\end{frontmatter}


\section{Introduction}
\label{S:1}

Surgical Site Infection (SSI) is a national priority in healthcare research~\cite{Mahmoud2009,Kirkland1999,SangerRamshorstMercanEtAl2016}. It occurs in 3-5\% of all surgical patients, and up to 33\% of patients undergoing abdominal surgery \cite{ROSSINI2013,Krieger2011}. More than 500,000 cases are estimated to occur in the US annually, resulting in additional costs of up to \$20,000 per infection. It also results in worse health outcomes for patients, such as prolonged length of hospital stay, increased mortality, and compromised health-related quality of life \cite{Perencevich2009,Dipiro1998,Zimlichman2013}. SSI is overall the most costly healthcare-associated infection, yet many of its associated costs are non-reimbursable. Surveillance methods have been invented since the early 1980s to provide appropriate data (such as risk indicators, clinical prediction rules) to surgeons to monitor how changes in practice can impact SSI occurrence \cite{Horan1992,Haley1985,PO2012}. Many of these surveillance systems follow the standard guidelines established in 1992 by the American Centers for Disease Control and Prevention (CDC)'s National Nosocomial Infections Surveillance (NNIS) system, and rely on volunteer surgical wards from public or private hospitals that routinely collect nosocomial infection data. Although these surveillance systems are continuously updated with successive risk indicators being included, they are mainly based on individual or very few risk factors. Furthermore, the quality of post-discharge data is poor, as active surveillance is costly and rarely performed prospectively, relying on retrospective review of clinical documentation. Recently, with the rapid advances of sensing and information technologies, new tools have been developed to provide patients the means to monitor their conditions and share this data with their providers. One example is the mobile health (mHealth) tool that we have developed to enable patient-initiated monitoring of surgical wounds and improve patient-provider communication after hospital discharge. Self-reported symptoms of pain, body temperature, and wound features, as well as patient- or caregiver-generated images of the wound may be acquired to assess the evolving condition of the wound and the likelihood of SSI.

While these emerging mHealth tools have provided unprecedented capacity to closely monitor patients for signs and symptoms of SSI and measure the evolution of the wound-related variables and other clinical factors, existing prognostic models of SSI have not been able to take advantage of this rich and accumulating information for SSI prediction. Prognostic models of SSI (\cite{lawson2013risk,berger2013development,van2013surgical,ho2011differing,saunders2014improving}) in the literature have three main limitations. First, many of these models were based on medical knowledge or heuristics, not data-driven by nature. Thus, their utility is limited on some specific phenotypes or particular time windows of the SSI progression process. Second, these models only predict whether or not the patient will develop SSI, but do not predict when. Third, most existing models only incorporate static variables known as of the end of the operation, e.g., demographics, pre-operative laboratory results, comorbidities, and operative factors. These models do not incorporate the continuous observations of the patients and their wound.

\begin{figure}[!t]
\centering
\includegraphics[width=0.5\linewidth]{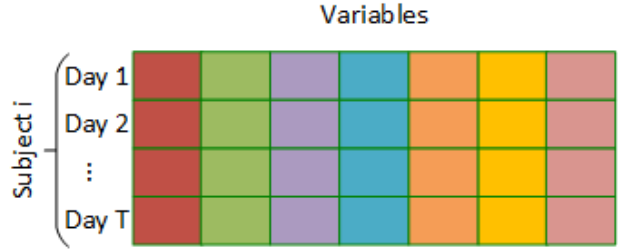}
\caption{An illustration of the dynamic data from a subject as a spatial-temporal data matrix}
\label{fig:sample}
\end{figure}

To overcome the limitations of existing SSI prediction models, especially when utilizing the dynamic data collected by mHealth tools \revise{(e.g., an illustration of such dynamic data as a spatial-temporal data matrix is shown in Figure~\ref{fig:sample}), we develop a solution that is specifically designed to address the unique analytic challenges from dynamic mHealth data. The collected dynamic data includes continuous measurements on a range of wound variables and clinical factors, for which we form a spatial-temporal data matrix for each individual (Figure~\ref{fig:sample}). Rather than predicting the development of SSI based on a snapshot of these factors or the static variables known as of the end of the surgical operation, we propose a learning formulation to predict the time to the development of SSI directly based on these spatial-temporal matrices. In our experience with real-world applications, one common data challenge is that there are many patients who do not develop SSI during observational period} (in this paper we name them as ``censored samples''). What is worse is that, the number of censored samples is usually greater than the number of samples that have observed outcomes (\revise{in this paper we name them as ``complete samples''}). Thus, a learning formulation that can fuse both types of samples is needed. Another data challenge is, as a well-known phenomenon in many healthcare applications, we encounter many missing values that result in a challenge for the learning formulation. We therefore propose a matrix completion approach to overcome this challenge and derive a convex optimization formulation to solve it. Further, with the spatial-temporal matrix representation for the dynamic mHealth data as the input of the prediction model, we investigate the use of a bilinear formalism to reduce the dimensionality of the prediction model. Our main contributions include:
\begin{itemize}
	\item  Developed a flexible learning formulation that can convert the dynamic clinical signals of an individual (represented as the ``spatial-temporal matrix") into accurate prognostics of SSI onset, even when some data have censored outcome;
	\item  Proposed the use of matrix completion to mitigate the missing data problem that has been found ubiquitous in many healthcare applications;
	\item  Developed efficient algorithms to implement the machine learning models; specific optimization strategies were developed to ensure the feasibility and robustness of the algorithms;
	\item  Conducted extensive numerical studies on a real-world dataset (\revise{that includes 860 patients observed in a time window ranging from 2 to 21 days while 167 developed SSI}), which generated real-world experience and empirical evidence for using the proposed method with dynamic mHealth data.
\end{itemize}

The remainder of the paper is organized as follows: In Section 2, a brief review of related work will be provided. In Section 3, our method will be presented with detailed explanations of the formulation, together with the computational algorithm for solving the proposed formulation. The real-world implementation of this method will be presented and discussed in Section 4. Conclusions will be drawn in Section 5.

\section{Related Work}
\label{S:2}
Our work is closely related to the following two areas: prediction models of SSI risk and machine learning methods that are related to the analysis of spatial-temporal data.

\textbf{Risk prediction models of SSI}: Many current risk prediction models for SSI are built upon expert opinions. For instance, the Ventral Hernia Working Group (VHWG) has categorized patients into 4 grades: low risk, comorbid, potentially contaminated, and infected. A grading system has thereafter been developed to predict the risk of SSI, based on expert opinions but not directly on patient data. This model has later been modified by researchers, including the 3-tier system developed in \cite{kanters2012modified}, which are still knowledge-driven rather than being data-driven. On the other hand, many data-driven risk scores for SSI have been developed over the years, such as the ones in \cite{lawson2013risk,berger2013development,van2013surgical,ho2011differing,saunders2014improving}. Among these models, the National Nosocomial Infection Surveillance (NNIS) Risk Index is a well-accepted risk-assessment tool, but it only includes three predictors. Thus, it has limited capacity to utilize the much more information that we could collect nowadays for SSI prediction. For more complex models, such as the model developed in \cite{lawson2013risk}, a hierarchical multivariate logistic regression model was used to discriminate SSI from non-SSI patients based on risk factors such as some operative variables, preoperative clinical severity, risk factors, comorbidities, and other variables related to the hospitalization procedure. However, these existing models build on the static measurements of some selected risk factors rather than the dynamic post-operative symptom and wound observations. \revise{ Recently, \cite{SangerRamshorstMercanEtAl2016} investigated the use of the data of ``last five days'' (defined as the last 5 days  prior to SSI onset for SSI patients or the last 5 days for non-SSI patients) for predicting SSI onset, which showed better performance than using only cross-sectional measurements. Comparing with \cite{SangerRamshorstMercanEtAl2016}, our study focuses more on the methodological issues of how to mitigate the statistical challenges in order to automate the process of using the incomplete dynamic data to build and select the best prediction model. More related works that could be applied on a range of disease contexts can be found in \cite{Peek2014}, which has covered a wide range of prognostic models.}

\textbf{Machine learning methods}: Matrix completion \cite{candes2009exact, cai2010singular, ji2010robust, liu2013tensor, signoretto2011tensor} is an important approach to estimate missing elements in a data matrix, \revise{without making strong assumptions on data missing mechanisms}. The authors in \cite{candes2009exact} proved that the low-rank matrix completion technique only needs $O(mr\log^2 m)$ to exactly recover the missing values for a $m\times n$ matrix with rank $r$, assuming $m\geq n$. A related formalism that can be used for building prediction models using spatial-temporal data could be the bilinear models. The bilinear model is a popular approach to capture information in two different dimensions, for example in text mining \cite{tenenbaum2000separating}, computer vision \cite{freeman1997learning}, and image processing \cite{olshausena2007bilinear}. \revise{A specific example is, for classification using brain image data or EEG signals, the input data for classification are represented as matrices with columns representing space indices and rows representing time indices. It is always possible to apply linear models by creating data vectors as stacking of the elements of the data matrix; however, such stacking approaches can not exploit the spatial and temporal structure in the data matrix and will inevitably lead to a high-dimensional weight vector in the linear model. Thus, we adopt the bilinear formulation to better exploit the data matrix structure to obtain a more parsimonious representation of the weight vector.}  For instance, Singular Value Decomposition~(SVD) is a typical bilinear model that identifies representative vectors (characterized as eigenvectors) in both dimensions to span the data space. A general framework was presented in~\cite{tenenbaum2000separating} for solving two-factor tasks using bilinear models, which can characterize factor interactions and employ efficient algorithms based on the singular value decomposition and expectation-maximization. Although not directly applicable here, existing matrix completion formulations and bilinear models provide a great resource for us to develop a systematic data analytic pipeline by building on these models and tackle the data challenges with the dynamic mHealth data for SSI prediction.

\section{Approach}
\label{S:3}
\subsection{An Overview of the Proposed Machine Learning System}
The overall goal of the proposed machine learning system is to predict the time to SSI onset using the spatial-temporal matrix data. There are several major data challenges that need to be overcome, such as the fusion of both complete samples and censored samples, missing values in the spatial-temporal matrices, and the complex two-way interaction structure of the spatial-temporal matrices.

To mitigate these challenges, Figure \ref{fig:flow} illustrates an overall \revise{scheme of the proposed machine learning system. It consists of a novel modeling formulation to fuse both complete samples and censored samples, in which bounded matrix completion is used to mitigate the missing data problem. The learning formulation is further equipped with model regularization techniques such as the bilinear formalism and sparse learning to mitigate the challenges from the high dimensionality of the spatial-temporal matrix and to exploit its inherent spatial and temporal structure}. In the following subsections, each of the main machine learning algorithms will be described in details. \\

\begin{figure}[h]
    \centering
    \includegraphics[width=\textwidth]{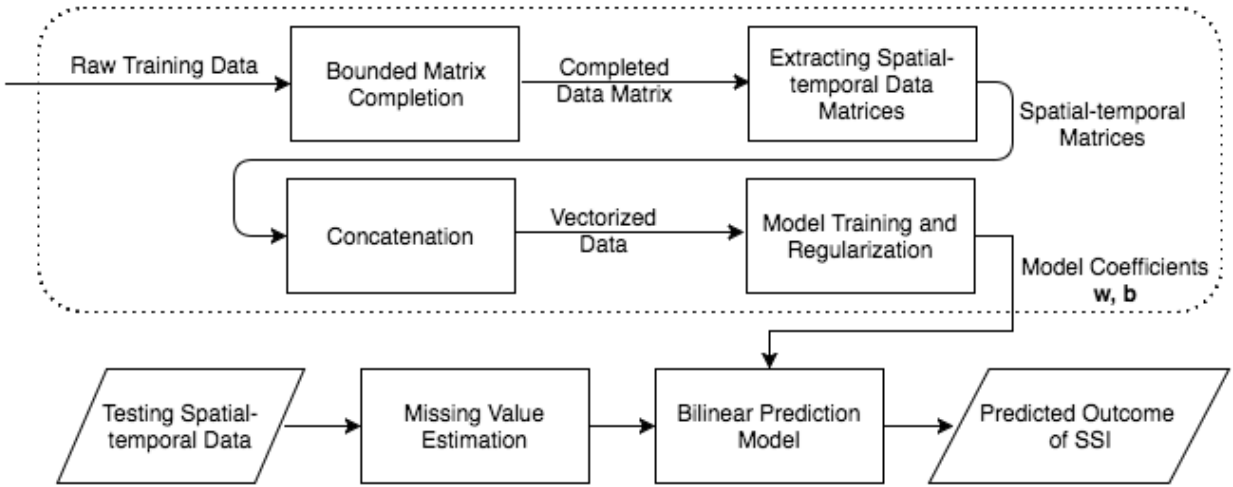}
    \caption{Algorithm flow}
    \label{fig:flow}
\end{figure}

\subsection{Learning Formulation for Data Fusion}
Denote the $i$-th spatial-temporal data sample in the training data as $\x_i \in \mathbb{R}^{T\times P}$, where $T$ denotes the length of observations and $P$ denotes the number of dynamic variables (such as those wound-related variables) that are collected. Note that here, the optimal length of observations $T$ is not a known parameter. Rather, it should be decided using data-driven methods, which will be discussed in Section 4. As we have mentioned, our training data consists of both complete samples and censored samples. We use $\Omega$ to denote the index set of complete samples, and use $\bar{\Omega}$ to denote the index set of censored samples. \revise{Missing data is referred to the missing elements in the matrix $\x_i$.} Further, we use $y_i$ to denote the time to develop SSI for sample $i$ if this sample is in $\Omega$, or the duration of observations if the sample $i$ is in $\bar{\Omega}$. \revise{For example, suppose that the sample $i$ is the subject $i$'s day 1 through day 5's measurements. If the subject $i$ developed SSI on day 7, which means $i\in\Omega$, $y_i = 7-5 = 2$. Otherwise, if the subject $i$ did not develop SSI in the observational period of 21 days ($i\notin\Omega$), $y_i = 21 - 5 = 16$.} Then, the following learning formulation is proposed to learn the prediction model based on both the complete and censored samples, under the assumption that there is no missing data in $\x_i$ and the prediction model is linear:

\begin{equation}
\begin{aligned}
\min_{\w \in \R^{T\times P}, b \in \R} &\quad
f(\w,b):=
\sum_{i\in\Omega} \frac{1}{2}(\langle \x_i,~\w\rangle+b-y_i)^2
\\
& +
\sum_{j\in\bar{\Omega}} \frac{\lambda}{2}(\min(0, \langle \x_j,~\w\rangle +b-y_j))^2. 
\end{aligned}
\label{eq:origin}
\end{equation}
\revise{Here, $\langle\textbf{x}_i, \textbf{w}\rangle$ denotes the the sum of entry-wise products of two matrices. Note that, the formulation consists of two components in the objective function. The first component, $\sum_{i\in\Omega} \frac{1}{2}(\langle \x_i,~\w\rangle+b-y_i)^2$, is the least-square loss function that penalizes the difference between the predicted values, i.e., $\langle \x_i,~\w\rangle+b$, with the real outcome data, i.e., $y_i$, from the complete samples. This is the same with classic linear regression models. The second component evaluates the squared hinge loss~\cite{bartlett2008classification}, which corresponds to the loss function for the censored samples. The rationale for the use of the squared hinge loss is that, if the predicted time to SSI onset is larger than the censored time $y_i$, we should not penalize the model; otherwise, we penalize the model. In this way, the censored samples can contribute to the learning of the prediction model in an unbiased way. To balance the contributions of these two types of samples, we further use a parameter, $\lambda$, in the formulation, i.e., larger $\lambda$ will allow the censored samples to give more influence on the model estimation. In practice, this parameter can be selected by model selection methods such as cross-validation to enable data-driven optimal model selection.}

\subsection{Bilinear Formulation for Spatial-Temporal Matrix}
Note that the model proposed above imposes no structure on the model parameters $\w$, ignoring the fact that the input data is a spatial-temporal matrix that has an inherent correlation structure in both dimensions. Here, we propose to adopt a bilinear formulation to capture the correlation structure within the spatial-temporal matrix. This is inspired by the success of the bilinear models in a range of applications such as \cite{tenenbaum2000separating, olshausena2007bilinear}. Specifically, here, by using the bilinear formulation, the prediction model can be rewritten as

\begin{equation}
\langle \x_i,~\w\rangle+b-y_i = \sum_{r=1}^R \u_r^\top \x \v_r+b-y_i ,
\end{equation}
where $\u\in \R^{T\times R}$ and $\v\in \R^{P\times R}$ with $\u_r$ and $\v_r$ denoting their corresponding $r$th column. Actually, the bilinear formulation can be interpreted from another point of view for dimension reduction by noting that $\sum_{r=1}^R \u_r^\top \x \v_r =  \langle \u\v^\top, \x \rangle$. In other words, to learn the bilinear prediction model characterized by $\u\in \R^{T\times R}$ and $\v\in \R^{P\times R}$, it is equivalent to learn the optimal solution of $\w$ if we restrict $\w$'s rank by $R$ in the formulation \eqref{eq:origin}. This leads to the following transformed learning formulation:

\begin{equation}
\begin{aligned}
\min_{\w \in \R^{T\times P}, b \in \R} &\quad
f(\w,b):=
\sum_{i\in\Omega} \frac{1}{2}(\langle \x_i,~\w\rangle+b-y_i)^2
\\
& +
\sum_{j\in\bar{\Omega}} \frac{\lambda}{2}(\min(0, \langle \x_j,~\w\rangle +b-y_j))^2 \\
\text{s.t.}& \quad \text{rank}(\w) \leq r.
\end{aligned}
\label{eq:main}
\end{equation}

Hence, besides the bilinear formalism, we note that this formulation can also be interpreted as a rank-regularized formulation. The low-rank constraint of $\w$ is a reasonable treatment here, since due to the potential spatial correlations (i.e., correlations between the variables) and temporal correlations (correlations between the time points), the spatial-temporal \revise{data inherently lie on a low-dimensional manifold. Therefore,  $\w$ has the corresponding low rank. Thus, the bilinear formulation induces the} rank-regularized formulation of $\w$.

\subsection{Preconditioning Reformulation and Optimization}
To solve the formulation in \eqref{eq:main}, first, we rewrite the formulation by vectorizing $\w$ and each spatial-temporal data $\x_i$. Particularly, we concatenate the rows of the matrix to convert the matrix to be a vector. Similarly, to generate \revise{the corresponding design matrices $X_{\Omega}$ and $X_{\bar{\Omega}}$ for the complete samples and censored samples respectively, a column vector can be constructed by concatenating the rows of a spatial-temporal matrix $\x_i$. A graphical illustration of this concatenation process will be presented in Figure ~\ref{fig:gen}. Also, $\y_{\cdot}$ denotes the corresponding vector of outcomes or censored outcomes. Then, we can rewrite \eqref{eq:main} as the following vectorized form:}
\begin{equation}
\begin{aligned}
\min_{\w \in \R^{T\times P}, b \in \R} \quad
&\frac{1}{2}\left\|X_\Omega^\top \text{vec}(\w) - \1 b-\y_\Omega\right\|^2
\\
+ &\frac{\lambda}{2}\left\|\min\left(\0, X_{\bar{\Omega}}^\top \text{vec}(\w) - \1 b - \y_{\bar{\Omega}}\right)\right\|^2 \\
\text{s.t.}& \quad \text{rank}(\w) \leq r,
\end{aligned}
\label{eq:main_2}
\end{equation}
\revise{where $\mathbf{1}$ denotes the vector whose entries are all one.} The objective function in \eqref{eq:main_2} is convex, and the projection onto the rank constraint has a closed form. It is tempting to apply the projected gradient descent (PGD) approach (\cite{nesterov2004introductory}) to solve this convex formulation. However, it would be inefficient if one directly applies PGD. The reason is that, due to the large spatial and temporal correlations in the spatial-temporal matrix, the optimization problem could be ill conditioned (recall the way by which we generated the matrix $X_{\Omega}$ and $X_{\bar{\Omega}}$). Thus, we propose the following preconditioning reformulation to enable the application of the PGD on solving for \eqref{eq:main_2}. \\
Let $\text{vec}(\hat{\w}) = (X_{\Omega} X_{\Omega}^\top)^{1\over 2}\text{vec}(\w)$. We can substitute $\text{vec}(\w)$ by $\text{vec}(\hat{\w})$ in the objective function in \eqref{eq:main_2}. The challenging issue is how to substitute $\text{vec}(\w)$ by $\text{vec}(\hat{\w})$ in the rank constraint. In fact, Lemma 1 indicates that $\text{rank}(\hat{\w}) = \text{rank}(\w)$ as long as $(X_{\Omega} X_{\Omega}^\top)^{1\over 2}$ is invertible, i.e., in other words, as long as $X_{\Omega}$ has full column rank.

\begin{lemma}
For any matrix $\w$, its rank remains the same after the vectorization and a full ranked linear transformation, that is, $\text{rank}(\hat{\w}) = \text{rank}(\w)$, where $A$ is any full rank square matrix and
\[
\revise{\text{vec}(\hat{\w}) = A\cdot\text{vec}(\w).}
\]
\end{lemma}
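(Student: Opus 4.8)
The plan is to reduce the statement to the elementary fact that left- or right-multiplying a matrix by an invertible matrix leaves its rank unchanged, and to exhibit the operation ``vectorize $\w$, multiply by $A$, reshape back'' as precisely such a two-sided multiplication. First I would recall the vectorization--Kronecker identity, which in the column-major convention reads $\text{vec}(Q\,\w\,R^\top)=(R\otimes Q)\,\text{vec}(\w)$ (for the row-major convention used here to form $\text{vec}(\cdot)$ the two factors simply swap roles). Hence, if the full-rank matrix $A\in\R^{TP\times TP}$ happens to factor as a Kronecker product, say $A=Q\otimes R$ with $Q\in\R^{T\times T}$ and $R\in\R^{P\times P}$, then reshaping $A\,\text{vec}(\w)$ into a $T\times P$ array yields exactly $\hat{\w}=Q\,\w\,R^\top$ (up to the convention-dependent transpose).

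Next I would argue that $A$ being full rank forces both Kronecker factors to be invertible, e.g.\ from $\det A=\det(Q)^{P}\det(R)^{T}$, so $\det A\neq0$ gives $\det Q\neq0$ and $\det R\neq0$. Then $\text{rank}(\hat{\w})=\text{rank}(Q\,\w\,R^\top)=\text{rank}(\w)$, since multiplying by an invertible matrix on either side preserves the dimension of the column space (and of the row space). This settles the claim whenever $A$ has the Kronecker form, and in particular whenever the preconditioner acts separately on the temporal and the spatial modes, i.e.\ $\hat{\w}=Q\,\w\,R^\top$ with invertible $Q,R$ chosen to whiten the two modes.

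The step I expect to be the real obstacle is the structural assumption in the first paragraph. A generic full-rank square matrix $A$ of size $TP\times TP$ is \emph{not} a Kronecker product, and for such $A$ the reshaping of $A\,\text{vec}(\w)$ generally fails to preserve rank: the rank-$r$ matrices do not form a linear subspace, so an arbitrary invertible linear map on $\R^{TP}$ can raise the rank (already for $T=P=2$). Consequently, to use this lemma for the scalar preconditioner $A=(X_\Omega X_\Omega^\top)^{1/2}$ used in the preconditioning reformulation, the careful route is either to verify that this particular $A$ (or a reparametrization achieving the same improvement in conditioning) admits a factored form $Q\otimes R$, or to replace the one-sided preconditioner by a genuinely two-sided change of variables $\hat{\w}=Q\,\w\,R^\top$ before invoking the rank identity. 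Once the two-sided/Kronecker structure is secured, the remaining steps are the routine linear-algebra facts above.
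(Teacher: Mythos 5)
Your proposal is sound, and the obstacle you flag in your last paragraph is real: the lemma as stated, for an \emph{arbitrary} full-rank square $A$ acting on the vectorization, is false. Your $T=P=2$ reasoning can be made completely explicit: take $\w=\left(\begin{smallmatrix}1&0\\0&0\end{smallmatrix}\right)$, so $\text{vec}(\w)=(1,0,0,0)^\top$, and let $A$ be the invertible unit lower-triangular matrix that adds the first coordinate to the fourth; then $\text{vec}(\hat{\w})=(1,0,0,1)^\top$ reshapes to the identity matrix, and the rank jumps from $1$ to $2$. The paper's own proof goes wrong at the sentence ``any column of $\hat{\w}$ is the linear combination of all columns in $\w$'': when $A$ scrambles all $TP$ entries, a column of $\hat{\w}$ mixes entries of $\w$ taken from different rows and columns and need not lie in the column space of $\w$ at all. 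That claim is valid only when $A$ respects the matrix structure, i.e.\ when the map is $\w\mapsto Q\,\w\,R^\top$ (a Kronecker-structured $A$), which is exactly the restricted setting in which you prove the statement correctly via the vec--Kronecker identity and the invertibility of both factors.

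So the comparison is: the paper attempts a direct range-space argument that silently assumes a Kronecker structure it never states, while you make that structure explicit, prove the rank identity in that case, and observe that the general case fails. Your further concern about the application is also warranted: the preconditioner $A=(X_\Omega X_\Omega^\top)^{1/2}$ is a generic symmetric positive-definite $TP\times TP$ matrix with no reason to factor as $Q\otimes R$, so the lemma as invoked does not justify replacing $\text{rank}(\w)\le r$ by $\text{rank}(\hat{\w})\le r$ in the preconditioned problem --- the two constraints define genuinely different feasible sets. The fix you suggest, a two-sided change of variables $\hat{\w}=Q\,\w\,R^\top$ with invertible $Q,R$ chosen to whiten the temporal and spatial modes separately, is the natural way to retain both the preconditioning effect and the rank constraint; short of that, one would have to accept that the reformulated problem is a different (if perhaps still useful) relaxation rather than an equivalent one.
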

\begin{proof}
We first prove that $\text{range}(\hat{\w}) \subset \text{range}(\w)$. \revise{Note that the range of a matrix is defined as the span (set of all possible linear combinations) of its column vectors.} It is easy to see that any column of $\hat{\w}$ is the linear combination of all columns in $\w$. So the range (or column) space of $\hat{\w}$ is a subspace of the range (or column) space of $\w$. Next, we show $\text{range}({\w}) \subset \text{range}(\hat{\w})$. Since $A$ is invertible, we can write \revise{$\text{vec}(\w) = A^{-1}\cdot\text{vec}(\hat{\w})$}. For the same reason above, the range space of matrix $\w$ is the subset of the range space of matrix $\hat{\w}$. Then, we can complete the proof by combining both conclusions.
\end{proof}

Then, ignoring the constant terms, we can finalize the preconditioning reformation as follows:
\begin{equation}
\begin{aligned}
\min_{\hat{\w} \in \R^{T\times P}, b \in \R} &\quad
\frac{1}{2}\left\|\text{vec}(\hat{\w})\right\|^2 - (\1b +\y_{\Omega})^\top X_{\hat{\Omega}}^\top(X_{\Omega}X_{\Omega}^\top)^{-{1\over 2}}\text{vec}(\hat{\w}) + {1\over 2}\|\1b + \y_{\Omega}\|^2
\\
& + \frac{\lambda}{2}\left\|\min\left(\0, X_{\bar{\Omega}}^\top(X_{\Omega}X_{\Omega}^\top)^{-{1\over 2}}\text{vec}(\hat{\w}) - \1 b - \y_{\bar{\Omega}}\right)\right\|^2 \\
\text{s.t.}& \quad \text{rank}(\hat{\w}) \leq r.
\end{aligned}
\label{eq:main_3}
\end{equation}


Now we can apply PGD to solve \eqref{eq:main_3}. As summarized in Algorithm~\ref{alg:pgd}, \revise{first, we run gradient descent with a fixed step size $\eta$ on the smooth part (objective function); and then, solve the proximal mapping (projection) with Singular Value Decomposition (SVD) and recover $\w$ by $\text{vec}(\w) = (X_{\Omega}X_{\Omega}^\top)^{-{1\over 2}} \text{vec}(\hat{\w})$. Note that, in Algorithm~\ref{alg:pgd}, the criterion of convergence is met when the decrease rate of the objective function value is less than 0.0001 in an iteration.}

\begin{algorithm}                      
\caption{Projected Gradient Descent}          
\label{alg:pgd}                           
\begin{algorithmic}                    
    \REQUIRE $X_{\Omega}$, $X_{\bar{\Omega}}$, $\lambda$, $r$ and $\eta$
    \ENSURE $\w$, $b$
    \STATE $A = (X_{\Omega}X_{\Omega}^\top)^{-{1\over 2}}$
    \WHILE{not converge}
    \STATE $\nabla_{\hat{\w}} = \text{vec}(\hat{\w}) - (\1 b +\y_{\Omega})^\top X_{\Omega}^\top A+ \lambda AX_{\bar{\Omega}} \min(\0,X_{\bar{\Omega}}^\top A\cdot
    \text{vec}(\hat{\w}) - b - \y_{\bar{\Omega}})$
    \STATE $\nabla_{b} = \1 (b+1) + \y_{\Omega} + \lambda \min(\0,X_{\bar{\Omega}}^\top A\cdot
    \text{vec}(\hat{\w}) - b - \y_{\bar{\Omega}})$
    \STATE $\text{vec}(\hat{\w}) = \text{vec}(\hat{\w}) - \eta\nabla_{\hat{\w}}$
    \STATE $b = b - \eta\nabla_{b}$
    \STATE $[U, \Sigma, V] = \text{svd}(\hat{\w})$
    \STATE $\hat{\w} = U_{r} \Sigma_r V_r$
    \ENDWHILE
    \STATE $\text{vec}(\w) = A\cdot \text{vec}(\hat{\w})$
\end{algorithmic}
\end{algorithm}

One remaining issue in the implementation of the PGD approach is the selection of the step size $\eta$. It can be chosen as a sufficiently small value, or the AMIGO rule~\cite{aastrom2004revisiting} can be adopted to dynamically decide the value of $\eta$. No matter which approach is used, it has been shown in \cite{nesterov2004introductory} that the key issue is to ensure that the objective function is decreasing iteratively, which is not a difficult task empirically.

\subsection{Bounded Matrix Completion (BMC)}
In this section, we will develop a matrix completion approach to mitigate the missing data issue in our problem. Missing data has been ubiquitous in many healthcare applications. We found no exception in our study, since the data is collected by patients and for many reasons the patient compliance in data collection has always been a challenge. We propose to use the matrix completion approach to fill in the missing values, rather than using simple heuristic approaches such as using mean values to replace the missing values. Our rationale is, again, the correlations in the spatial-temporal matrix essentially imply that the spatial-temporal matrix is low dimensional. For instance, some variables that are used to measure the wound could be correlated, such as the color and temperature of the wound. Due to this reason, by using their correlations, we could recover many missing values as long as we have observed some values in the matrix. To exploit this idea, we assume that there exist a few basis vectors that span the daily observation vector of each individual (i.e., in the same spirit of many dimensionality reduction methods). Let $\mathcal{T}_{n,t}$ denote the $t$-th day observation vector of the $n$-th individual. We assume that $\mathcal{T}_{n,t}$ can be represented as a linear combination of the basis vectors $\{U_1, U_2, \cdots, U_r\}$:
\[
\mathcal{T}_{n,t} \approx \sum_{i=1}^r \alpha_i U_i.
\]

Given observed measurements in $\mathcal{T}_{n,t}$, the optimal weights $\{\alpha_1, \alpha_2, \cdots, \alpha_r\}$ can be obtained by minimizing the difference between the corresponding elements in $\sum_{i=1}^r \alpha_i U_i$ with the observed measurements in $\mathcal{T}_{n,t}$. In other words, we seek to identify the optimal weights that can best match the projected vector $\sum_{i=1}^r \alpha_i U_i$ with the observed incomplete vector $\mathcal{T}_{n,t}$. This rationale leads to the following formulation:
%
\begin{align*}
    \min_{X_{\Phi},M} \quad & ||X-M||^2_{F} \\
    \text{s.t.} \quad & \text{rank}(M) \leq r \\
    & X_{i,j} \in [l_j, u_j] \quad \forall (i,j) \in \Phi
\end{align*}
where $M$ encodes the information of $\{U_1, U_2, \cdots, U_r\}$ and $\Phi$ denotes the set of the \revise{missing values. 
Note that, in this bounded low-rank matrix completion problem, we further restrict the missing elements to be within a certain range $[l_j, u_j]$ for the corresponding variable $j$. This is to ensure that the missing values won't be filled in with unreasonable values. Actually, the lower bound and the upper bound can be decided based on the observed data to ensure the clinical validity of the estimated values:
\begin{align*}
    l_{j} :=& \min_{k\in \{i:~(i,j)\notin \Phi\}} X_{k,j}; \\
    u_{j} :=& \max_{k\in \{i:~(i,j)\notin \Phi\}} X_{k,j}.
\end{align*}}
\revise{Note that these bounds are obtained from the observed data in $\Phi^c$, the complement of $\Phi$. }

To solve the BMC problem, we apply the coordinate descent algorithm that iteratively \revise{solves the optimization by alternately updating} $X_{\Phi}$ or $M$ while fixing the other one. Since the value of the objective function decreases iteratively, this algorithm is guaranteed to converge. It is worthy of mentioning that, there is a closed-form solution in each step to solve for $M$ or $X_{\Phi}$. We summarize the BMC algorithm in Algorithm~\ref{alg:bmc}.
\begin{algorithm}                      
\caption{Bounded Matrix Completion (BMC)}          
\label{alg:bmc}                           
\begin{algorithmic}                    
    \REQUIRE \revise{$X_{\Phi^c}$}, $r$, $l_j$ and $u_j$
    \ENSURE \revise{$X_{\Phi}$}
    \STATE \revise{Initialize $X_{\Phi}$}
    \WHILE{not converge}
    \STATE $[U, \Sigma, V] = \text{svd}(X)$
    \STATE $M = U_{r} \Sigma_r V_r$
    \STATE $X_{ij} = \max(l_j, \min(u_j, M_{ij}))\quad \forall (i,j)\in$ \revise{$\Phi$}
    \ENDWHILE
\end{algorithmic}
\end{algorithm}

\subsection{Missing Value Estimation for the Testing Data}

As shown in the previous section, the BMC formulation is used to fill in the missing values in the training data. The rationale is to learn a few basis vectors $U_i$'s, and then, for each incomplete observation vector $\mathcal{T}_{n,t}$, the algorithm seeks to identify the optimal weights that can best match the projected vector $\sum_{i=1}^r \alpha_i U_i$ \revise{with $\mathcal{T}_{n,t}$, where $U =[U_1, U_2, \ldots, U_r]$ is obtained by applying SVD to the learned low-rank imputation matrix: $M= U\Sigma V^\top$.} Then, the missing values in $\mathcal{T}_{n,t}$ can be filled in by the corresponding elements in $\sum_{i=1}^r \alpha_i U_i$. Thus, it is straightforward to use the results from the BMC formulation learned from the training data to fill in the missing values in the testing data as well. Denote the coming testing data as $z$. For the new testing data $z$, its missing elements indexed by $S$ can be estimated by solving the following convex optimization problem:
\begin{align*}
\min_{\alpha, z_{S}}\quad & \|z-U\alpha\|^2
\\
\text{s.t.}\quad & z_j \in [l_j, u_j] \quad \forall~j \in S.
\end{align*}

Then, the best estimation can be found by updating $z_{S}$ and $\alpha$ iteratively by following the formulas shown below:
\[
\alpha = U^\top z,\quad z_j = \min(U_j, \max(l_j, U_{j\cdot}\alpha)),\quad \forall j \in S.
\]

\revise{Note that, many of the existing missing data imputation methods have been built on different assumptions of the mechanism of why data is missing, such as the Missing Completely At Random (MCAR) and the Missing At Random (MAR). While this may lead to more tailored missing data imputation methods, here, we have difficult in identifying and validating the missing data mechanism in our study. Hence, we pursue this low-rank model that places less strict assumptions on the missing data mechanism, which leads to the development of the low rank based estimation methods. Also, the low rank based estimation method can be applied to scenarios where a large portion of elements (e.g., 50\% elements) are missing. }

\section{Experiments}
\label{S:4}

\subsection{Study population}
\begin{table}[hp]
	\centering
	\fontsize{10}{12}\selectfont
	\caption{Baseline data from patient cohorts with and without inpatient SSI}
	\begin{tabular}{L{0.3\textwidth} L{0.25\textwidth} L{0.25\textwidth} L{0.08\textwidth}}
		\hline
		& \begin{tabular}{@{}c@{}}\textbf{Without SSI} \\ (N=684; 80.6\%)\end{tabular} & \begin{tabular}{@{}c@{}}\textbf{\revise{With} SSI} \\ (N=167; 19.4\%)\end{tabular} & \textbf{p-value} \\
		\textbf{Patient factors} & & \\
		Age, mean, {[}CI{]}, years & 56.18 {[}55.09-57.27{]} & 57.48 {[}55.45-59.51{]} & 0.29 \\
		Male sex, N (\%) {[}CI{]} & 247 (36.1) {[}0.33-0.40{]} & 62 (37.1) {[}0.30-0.45{]} & 0.81 \\ \hline
		\textbf{Procedure-related} & & \\
		Type of operation & N (\%) & N (\%) & $<$0.001\\
		\quad abdominal wall & 44 (6.4) & 3 (1.8) \\
		\quad gastroduodenum & 27 (3.9) & 4 (2.4) \\
		\quad gall bladder/bile duct & 31 (4.5) & 4 (2.4) \\
		\quad liver & 101 (14.8) & 19 (11.4) \\
		\quad spleen/adrenal gland & 29 (4.2) & 4 (2.4) \\
		\quad small bowel & 35 (5.1) & 18 (10.8) \\
		\quad kidney & 179 (26.2) & 23 (13.8) \\
		\quad vascular & 50 (7.3) & 6 (3.6) \\
		\quad esophagus & 75 (11.0) & 25 (15.0) \\
		\quad large bowel & 69 (10.1) & 35 (21.0) \\
		\quad pancreas & 44 (6.4) & 26 (15.6) \\ \hline
		\textbf{Risk factors} & & \\
		& N (\%) {[}CI{]} & N (\%) {[}CI{]} \\
		Smoking & 287 (42.0) {[}0.38-0.46{]} & 59 (35.3) {[}0.28-0.43{]} & 0.12\\
		Diabetes mellitus & 83 (12.2) {[}0.10-0.15{]} & 21 (12.6) {[}0.08-0.19{]} & 0.88\\
		Chronic lung disease & 58 (8.5) {[}0.07-0.11{]} & 22 (13.2) {[}0.08-0.19{]} & 0.063 \\
		Systemic corticosteroid & 79 (11.6) {[}0.09-0.14{]} & 25 (15.0) {[}0.10-0.21{]} & 0.23 \\
		Chemotherapy & 46 (6.7) {[}0.05-0.09{]} & 12 (7.2) {[}0.04-0.12{]} & 0.83 \\
		Radiotherapy & 12 (1.8) {[}0.01-0.03{]} & 3 (1.8) {[}0.00-0.05{]} & 0.97 \\
		Ascites present & 16 (2.3) {[}0.01-0.04{]} & 10 (6.0) {[}0.03-0.11{]} & 0.014 \\
		Infection (non-SSI) & 75 (11.0) {[}0.09-0.14{]} & 14 (8.4) {[}0.05-0.14{]} & 0.33 \\
		Alcohol use & 311 (47.1) {[}0.43-0.51{]} & 70 (45.5) {[}0.37-0.54{]} & 0.71 \\
		Alcohol quantity & 4.49 {[}3.85-5.12{]} & 5.24 (3.54-6.94) & 0.34 \\
		Body Mass Index & & & 0.65 \\
		\quad Underweight & 19 (2.8) & 6 (3.6) \\
		\quad Normal & 317 (45.2) & 80 (47.9) \\
		\quad Overweight & 220 (32.2) & 59 (35.3) \\
		\quad Class 1 obesity & 80 (11.7) & 20 (12.0) \\
		\quad Class 2 obesity & 20 (2.9) & 8 (4.8) \\
		\quad Class 3 obesity & 8 (1.2) & 2 (1.2) \\ \hline
	\end{tabular}
	\label{baseline_demo}
\end{table}

\begin{figure*}[tp]
	\centering
	\includegraphics[width = 0.7\textwidth]{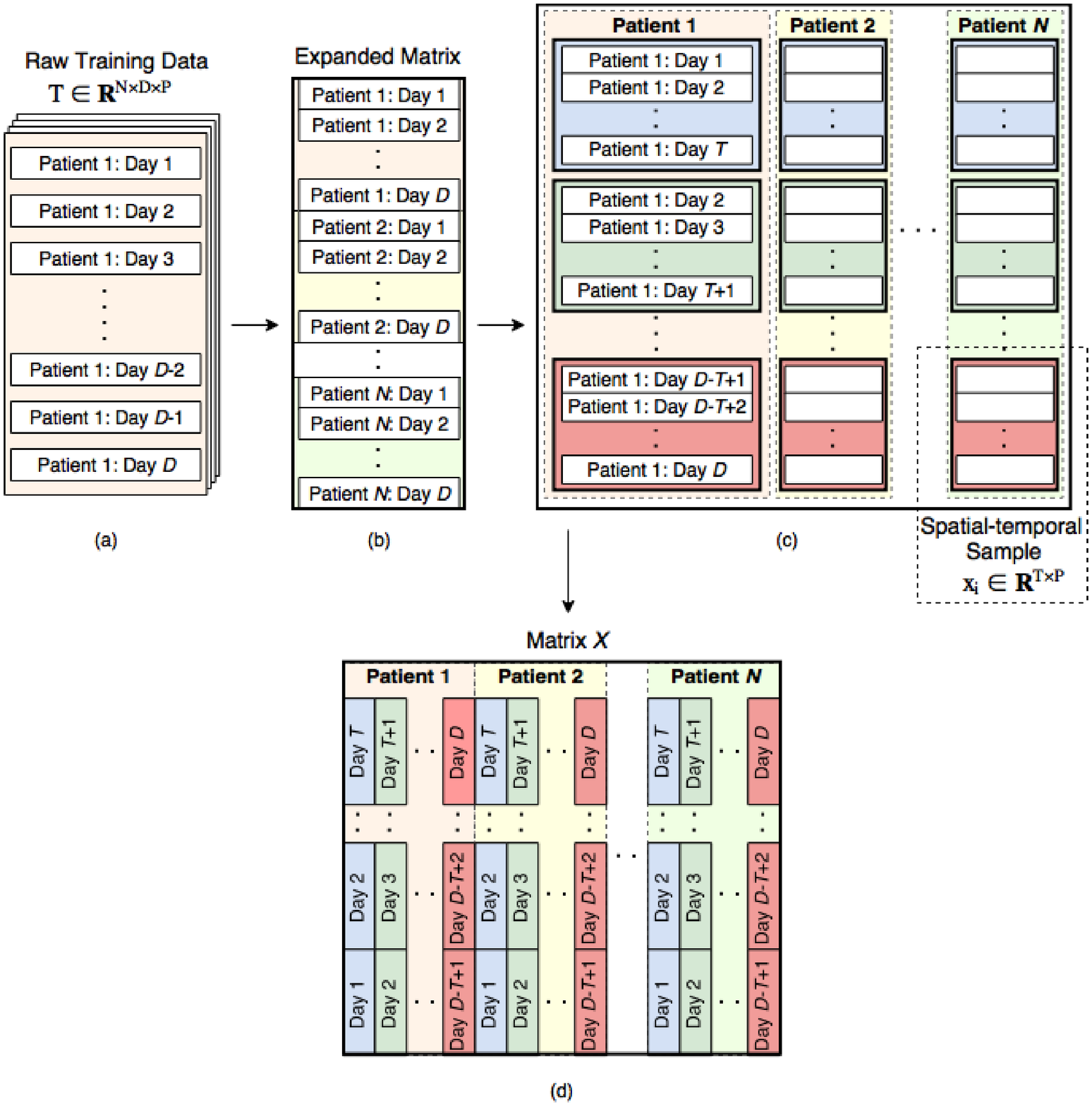}
	\caption{The data structure}
	\label{fig:gen}
\end{figure*}

A prospective cohort study of 1,000 open abdominal surgery patients was conducted at a 1200-bed academic teaching hospital in the Netherlands, described previously \cite{Ramshorst2013}. Patients who didn't undergo surgery (n=33) or with $<$ 2 days of wound observations (n=107) were excluded from analysis, leaving 860 patients in total. Subjects were prospectively assessed for CDC-defined superficial, deep and/or organ space infections \cite{Horan1992}. Superficial SSI are infections that occur within 30 days after the operation and involve only skin or subcutaneous tissues of the incision and at least one of the following: 1) purulent drainage from the incision, with or without confirmation by laboratory tests; 2) organisms isolated from an aseptically obtained culture or aspiration of the incision; 3) at least one of the following signs or symptoms: pain or tenderness, localized swelling, redness, or heat and superficial incision opened by the surgeon; and 4) diagnosis of superficial SSI by the surgeon or attending physician. Stitch abscesses and infected burn wounds are excluded. Deep incisional SSIs are infections occurring within 30 days of surgery, involving the deep soft tissues (muscle and fascia) and at least one of the following: 1) purulent drainage from the deep incision; 2) incision spontaneously dehisces or is opened by the surgeon when the patient has at least one of the following: fever (38°C), localized pain, or tenderness; 3) an abscess or other evidence of infection involving the deep incision is found on examination, reoperation, or histopathologic or radiology examination; and 4) deep SSI is diagnosed by the surgeon or attending physician. Finally, an organ space infection is an infection occurring within 30 days of surgery; the infection appears related to the surgery and involves any part of the anatomy (organs or spaces), other than the incision and at least one of the following: 1) purulent drainage from drain; 2) organisms isolated from an aseptically obtained culture of fluid or tissue; 3) diagnosis by the surgeon or attending physician; and 4) an abscess or other evidence of infection involving the organ space found on direct examination, reoperation, or radiologic examination. Patients' characteristics are reported in Table \ref{baseline_demo} for both SSI and non-SSI patients\revise{, where the p-value is computed by the t-test for comparing the two groups}.

\subsection{Data collection}
Subjects in the dataset were examined daily, using a previously described protocol \cite{Ramshorst2014}, from post-operative day 2 until discharge or 21 days, whichever was earlier. Follow-up was performed at 30 days through clinic visit, phone, or letter to ascertain post-discharge infections. Table \ref{conti_data_demo} shows the dynamic wound data that was collected, including definitions of categorical wound score variables \cite{Ramshorst2014}. For analysis purposes, we defined the SSI group as having any of the 3 types of SSI due to the small numbers of deep and organ-space infections. In addition, though a patient may have developed multiple types of SSI during the observation period, we only include their first infection in this analysis. The non-SSI group was defined as having not developed any kind of infection, but may have had a post-discharge infection. \revise{For patients who didn't develop SSI during the observational period, their days of observation are 21. For patients who developed SSI during the observational period, the summary statistics of their observational time is shown in Table \ref{complete_ssi}.}

\begin{table}[hp]
\centering
\begin{tabular}{c|c|c|c|c|c}
Minimum & 1st Quantile & Median & Mean  & 3rd Quantile & Maximum \\ \hline
3    & 5       & 7      & 8.244 & 11      & 21
\end{tabular}
\caption{Days of post-operative observation for patients developing in-hospital SSI}
\label{complete_ssi}
\end{table}

Further, Figure ~\ref{fig:gen} schematically illustrates the process by which we translated the original data into the data structure that will be used by the proposed machine learning system. The basic idea is to use a sliding window to segment each individual's time series measurements into blocks of equal size $T$.


\fontsize{10}{12}\selectfont
\begin{longtable}{ll L{0.5\textwidth}}
\caption{Repeated data collected} \\
\endfirsthead
\multicolumn{3}{l}
{\tablename\ \thetable\ -- \textit{Continued from previous page}} \\
\endhead
\multicolumn{3}{r}{\textit{Continued on next page}} \\
\endfoot
\endlastfoot
\hline
Variable & Scale & Details \\
\hline
\textbf{Primary wound variables} & & \\
Induration amount (mm) & 0 & $>$5 mm \\
& 1 & 3-4 mm \\
& 2 & 1-2 mm \\
& 3 & 0 mm \\
\hline
Wound edge distance (mm) & 0 & 0 mm \\
& 1 & 1-2 mm \\
& 2 & 3-5 mm \\
& 3 & 6-10 mm \\
& 4 & 11+ mm \\
\hline
Slough/necrosis type & 0 & none visible \\
& 1 & white/grey nonviable tissue \\
& 2 & loosely adherent yellow slough \\
& 3 & adherent, soft, black eschar \\
& 4 & firmly adherent, hard, black eschar \\
\hline
Slough/necrosis amount & 0 & None visible \\
& 1 & $<$25\% of wound bed covered \\
& 2 & 25 to 50\% of wound covered \\
& 3 & $>$50\% and $<$75\% of wound covered \\
& 4 & 75 to 100\% of wound covered \\
\hline
Granulation/epithelialization score & 0 & Skin intact \\
& 1 & 75 to 100\% of wound filled \\
& 2 & 25 to 75\% of wound filled \\
& 3 & $<$25\% of wound filled \\
& 4 & no granulation or epithelialization present \\
\hline
Exudate type & 0 & none or bloody \\
& 1 & serosanguineous: thin, watery, pale red/pink \\
& 2 & serous: thin, watery, clear \\
& 3 & purulent: thin or thick, opaque, tan/yellow \\
& 4 & foul purulent: thick, opaque, yellow/green with odor \\
\hline
Exudate amount & 0 & none (tissue is dry) \\
& 1 & scant (non measurable amount) \\
& 2 & small (exudate spread over wound, gauzes 25\% wet) \\
& 3 & moderate (exudate irregularly spread over wound) \\
& 4 & large (large amount, widespread, gauzes $>$75\% wet) \\
\hline
Wound edge color & 0 & pink or normal for ethnic group \\
& 1 & bright red and/or blanches to touch \\
& 2 & white or gray pallor or hypopigmented \\
& 3 & dark red or purple and/or nonblanchable \\
& 4 & black or hyperpigmented \\
\hline
Temperature $10\,^{\circ}\mathrm{C}$ &  & Wound \\
& & 1 cm from wound edge (left/right) \\
& & 3 cm from wound edge (left/right) \\
& & 5 cm from wound edge (left/right) \\
\hline
Wound malodor & & Yes/no \\
\hline
\textbf{Other wound variables} & & \\
Hematoma* & & Yes/no \\
wound mass palpable* & & Yes/no \\
seroma* & & Yes/no \\
wound culture* & & Yes/no \\
visual analogue pain scale & 1-100 & Wound pain \\
wound length (cm) & \revise{count} & \dots cm \\
\hline
\textbf{Vital signs} & & \\
heart rate & \revise{count} & \dots bpm \\
diastolic RR & \revise{count} & \dots mmHg \\
systolic RR & \revise{count} & \dots mmHg \\
tympanic temperature $10\,^{\circ}\mathrm{C}$ & \revise{count} & \dots $10\,^{\circ}\mathrm{C}$ \\
\hline
\textbf{Other observations} & & \\
cough* & & Yes/no \\
productive cough* & & Yes/no \\
vomiting* & & Yes/no \\
ventilator* & & Yes/no \\
antibiotics* & & Yes/no \\
reoperation* & & Yes/no \\
nasogastric tube* & & Yes/no \\
Suspicion of ileus* & & Yes/no \\
serial operation number & \revise{count} & \\
\hline
* in previous 24 hours
\label{conti_data_demo}
\end{longtable}
\normalsize 



\subsection{Parameter tuning and validation}
Cross validation is adopted to tune the parameters in each model that we will build in this study. 
\revise{Specifically, we randomly split the samples into multiple sets with equal size. In our implementation, we take a training-testing ratio of 4:1. Then we  choose 4 sets as the training data, and the one left as the testing data. For each proposed model, we repeat the training-testing procedure 5 times, to ensure that every set is used as the testing data once.} We use the mean absolute prediction error (MAE), i.e.,  $\sum_{i\in\Omega} {|\Omega |}^{-1} |\langle \x_i,~\w\rangle+b-y_i|$, to evaluate the prediction performance of each model in predicting the onset time of the individuals.



\begin{figure*}[h]
    \centering
    \includegraphics[width=0.8\textwidth]{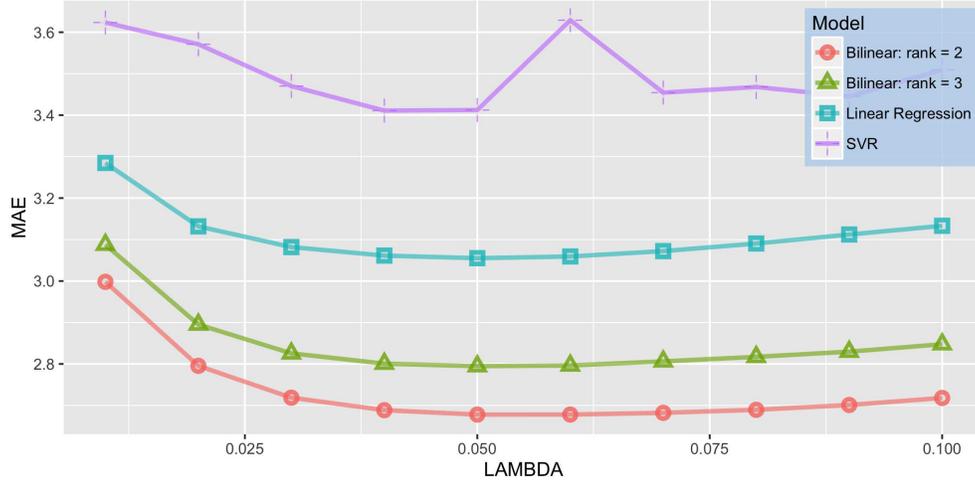}
    \caption{The MAE of different methods across different $\lambda$}
    \label{fig:mae_comparison}
\end{figure*}

\begin{figure*}[h]
    \centering
    \includegraphics[width=0.8\textwidth]{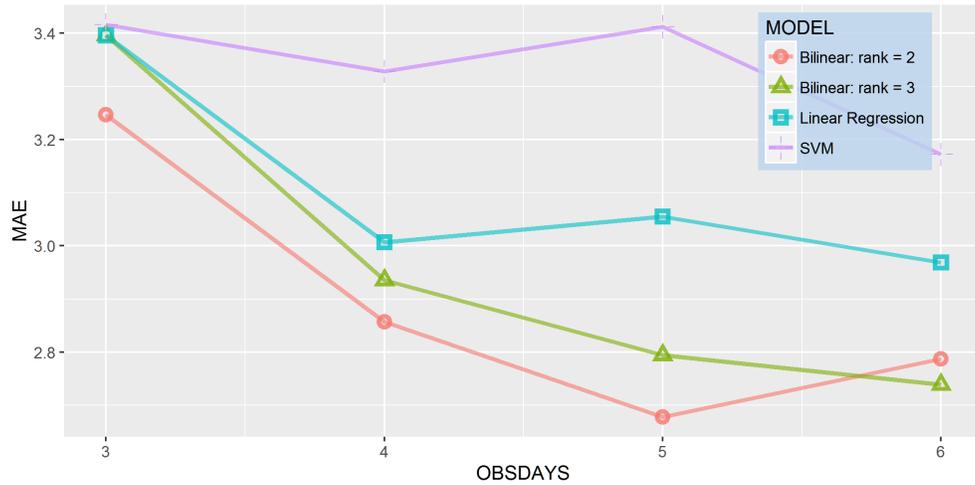}
    \caption{The MAE of different models across different duration of observation}
    \label{fig:mae_days}
\end{figure*}

\subsection{Comparison with other models}
We will compare our proposed learning system with closely related benchmark approaches. The first one is the classic regression model \cite{LR} that assumes linear and additive predictive effects of the predictors on a continuous outcome variable. It has been used in many risk prediction models in a wide range of applications. The second one is the Support Vector Regression (SVR) model \cite{SVR}. It is a more robust model than the classic regression model by using a soft margin loss function, representing the state-of-the-art performance. The performance comparison of these models are shown in Figures \ref{fig:mae_comparison} and \ref{fig:mae_days}. Note that, Figure \ref{fig:mae_comparison} is to compare the methods under different values of $\lambda$ which essentially reflects the weight of the censored samples in the learning formulation. Since we have an unbalanced dataset, using weights to balance the contribution of both the complete and censored samples has been a common approach in the use of the existing methods. From both Figures \ref{fig:mae_comparison} and \ref{fig:mae_days}, it can be observed that our methods perform significantly better than both benchmark methods across all the comparisons. Also, it seems that with rank = 2, our method performs better than its companion that has rank = 3, indicating that for this dataset, rank = 3 might result in overfitting. \revise{Note that, in generating this result, we didn't include the selection of the rank into the cross-validation procedure. For a given rank, we adopted cross-validation to select for the remaining parameters. Actually, \revise{later in Section 4.6, we will show that the ``full-scale'' cross-validation study that includes the selection of the rank actually selected rank = 2 as the optimal model, which indicates that the cross-validation can help control the model complexity and reduce the risk of over-fitting.}}


\subsection{Evaluation of the matrix completion method for missing data imputation}
We further evaluate the performance of the matrix completion method for missing data imputation in comparison with other methods. For example, for the ``column mean'' method, to fill in the missing value on feature $j$ of individual $i$ at a certain time point, the mean value of feature $j$ of all the individuals across all the time points can be used based on the observed measurements. \revise{Other methods we used for comparison include the K-Nearest-Neighbor (KNN) method that has been widely used for missing data imputation \cite{kim2004reuse,Keller1985} and a recent Bayesian approach \cite{mipackage}. We train the models using different missing data imputation methods under different values of $\lambda$ and report the results as shown in Figure \ref{fig:impmae}.} It clearly shows that our low-rank matrix completion method could lead to superior performance on missing data imputation, providing better prediction performance of the model.

\begin{figure*}[h]
    \centering
    \includegraphics[width=0.8\textwidth]{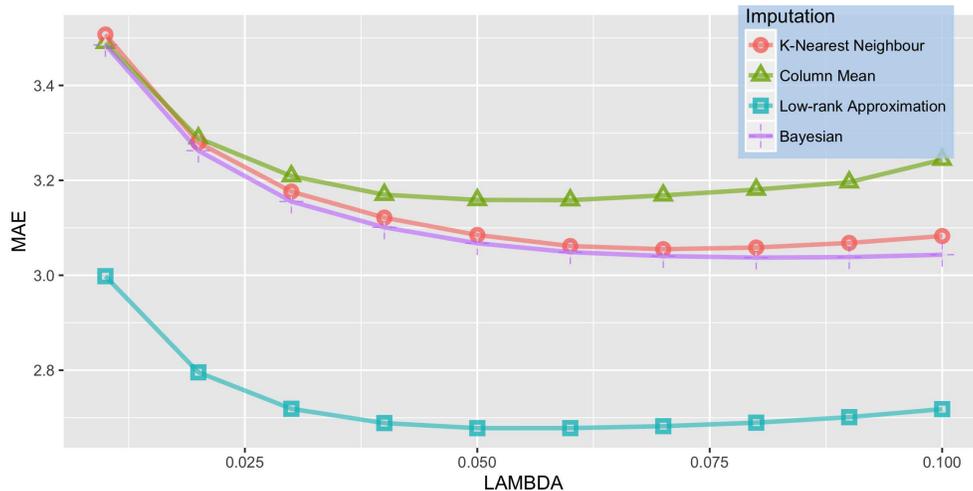}
    \caption{The MAE of different models that use matrix completion and other procedures for missing data imputation, respectively}
    \label{fig:impmae}
\end{figure*}

\subsection{Optimal configuration of the proposed learning system}
To identify the optimal configuration of the proposed learning system, we build a sequence of models to investigate the performances of different configurations. Critical parameters include the duration of the dynamic data we could use for building the prediction model, the rank of the model parameters $\w$, and the value of $\lambda$. Among these parameters, probably the duration of the dynamic data is the most important parameter. Apparently, there is a trade-off between accuracy and cost. With longer duration, more information could be accumulated which will lead to better prediction accuracy. On the other hand, it will result in higher cost, not only in data collection, but also in the healthcare cost due to the potential delay of identifying the patients who will develop SSI soon. Therefore, it is a practical and important decision to find a reasonable length for observation and prediction. Here, \revise{based on the cross validation with the training set ratio set to be 4:1,} the prediction performances of the models are shown in Table \ref{M3}. This demonstrates that the best model could be built if the duration of the dynamic wound data is 5-days and the rank of $w$ is 2. A worse performance was generated when the length of observation is 6. This is probably because that the sample size for training the model drops significantly when the length of observation increases. \revise{In Figure \ref{fig:vars}, we also present the important variables of the best model in Table \ref{M3} with their coefficients in decreasing order.} We further show the distribution of the predicted onset day for both the complete and censored populations using our best model in Figure  \ref{fig:dist_pred}, and demonstrate that our model can effectively separate the two groups.

\begin{figure}[hp]
\centering
\includegraphics[width=1\textwidth]{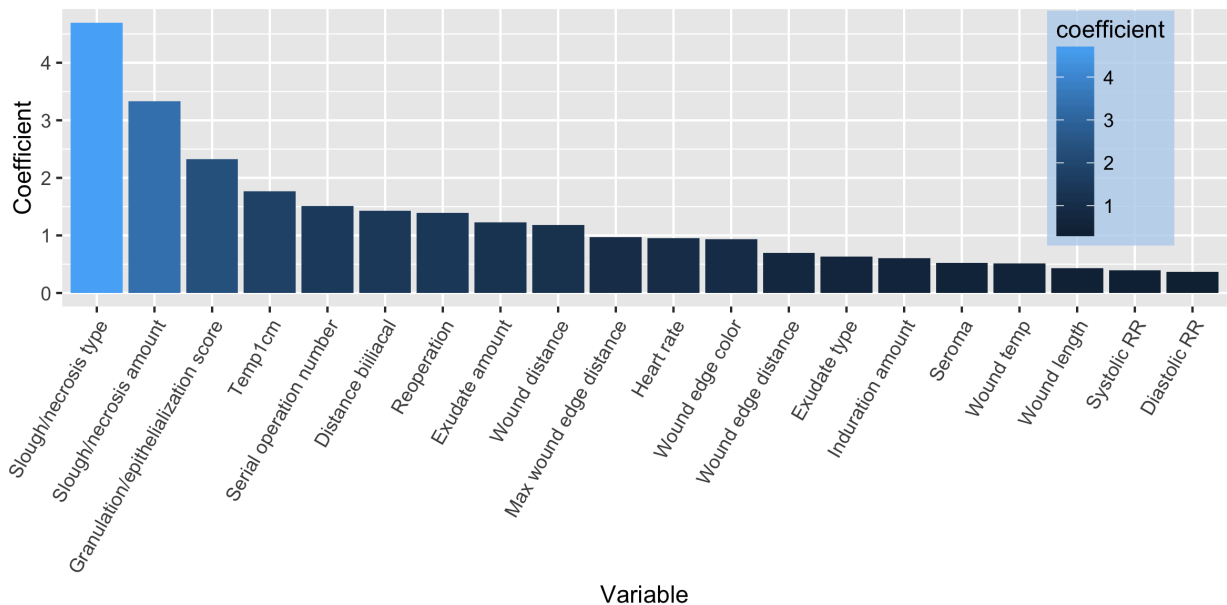}
\caption{Selected important variables with their coefficients}
\label{fig:vars}
\end{figure}


\begin{table}[]
\centering
\begin{tabular}{ccccc}
\hline
 &  & \multicolumn{3}{c}{$\lambda$-value} \\ \hline
\multicolumn{1}{c|}{Duration} & \multicolumn{1}{c|}{rank($\w$)} & \multicolumn{1}{c|}{0.01} & \multicolumn{1}{c|}{0.05} & 0.10 \\ \hline
\multicolumn{1}{c|}{\multirow{2}{*}{3}} & \multicolumn{1}{c|}{2} & \multicolumn{1}{c|}{3.720192} & \multicolumn{1}{c|}{3.246942} & 3.206233 \\
\multicolumn{1}{c|}{} & \multicolumn{1}{c|}{3} & \multicolumn{1}{c|}{3.817237} & \multicolumn{1}{c|}{3.395871} & 3.364114 \\ \hline
\multicolumn{1}{c|}{\multirow{2}{*}{4}} & \multicolumn{1}{c|}{2} & \multicolumn{1}{c|}{3.260273} & \multicolumn{1}{c|}{2.857273} & 2.863779 \\
\multicolumn{1}{c|}{} & \multicolumn{1}{c|}{3} & \multicolumn{1}{c|}{3.334785} & \multicolumn{1}{c|}{2.935544} & 2.962186 \\ \hline
\multicolumn{1}{c|}{\multirow{2}{*}{5}} & \multicolumn{1}{c|}{2} & \multicolumn{1}{c|}{2.998079} & \multicolumn{1}{c|}{\bf2.677925} & 2.718011 \\
\multicolumn{1}{c|}{} & \multicolumn{1}{c|}{3} & \multicolumn{1}{c|}{3.087649} & \multicolumn{1}{c|}{2.794683} & 2.847655 \\ \hline
\multicolumn{1}{c|}{\multirow{2}{*}{6}} & \multicolumn{1}{c|}{2} & \multicolumn{1}{c|}{3.085667} & \multicolumn{1}{c|}{2.787124} & 2.779734 \\
\multicolumn{1}{c|}{} & \multicolumn{1}{c|}{3} & \multicolumn{1}{c|}{3.080365} & \multicolumn{1}{c|}{2.739127} & 2.487222 \\ \hline
\end{tabular}
\caption{Prediction performance of different model configurations}
\label{M3}
\vspace{-5mm}
\end{table}

\begin{figure*}[h]
    \centering
    \includegraphics[width=0.8\textwidth]{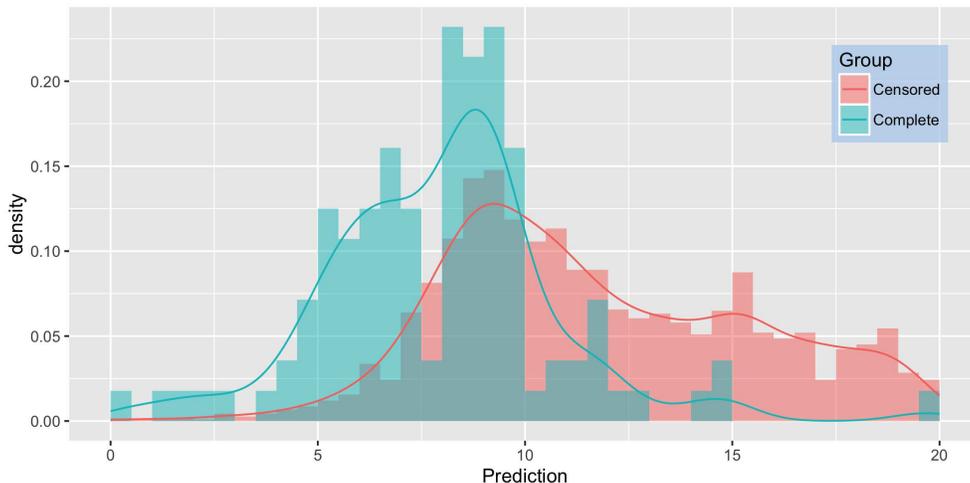}
    \caption{Distribution of the predicted onset day for both the complete and censored samples}
    \label{fig:dist_pred}
\end{figure*}



\section{Conclusions}
\label{S:5}
Accurate determination of the SSI risk of a particular patient is essential for deciding whether or not particular preventive strategies, i.e., enhanced prophylactic antibiotics, decolonization, or other emerging interventions should be adopted. Furthermore, uncovering developing SSI at the earliest possible time affords the opportunity for anticipatory intervention, preventing progression of infection and its morbid sequelae. It also holds great potential to reduce the healthcare cost since delayed diagnosis of post-discharge SSIs has significant financial and life quality costs, with more than half of patients who develop post-discharge SSI readmitted to the hospital \cite{Gibson2014}. In this paper, we investigated the problem of predicting the time to the onset of SSI using the dynamic wound data collected on individuals. The dynamic data of an individual, represented as the spatial-temporal matrix, include the continuous measurements of many wound-related variables. Many of the wound variables are applicable in a post-discharge setting, allowing monitoring of these wounds using automated image analysis to predict infections in real time. Thus, this study aligns well with the emerging mHealth tools that are developed to closely monitor SSI patients. While most existing SSI prediction models only use preoperative and operative variables, \cite{lawson2013risk,berger2013development,van2013surgical,ho2011differing,saunders2014improving}, to the best of our knowledge, we are the first team who developed a systematic treatment to utilize the dynamic wound data of an individual for SSI risk prediction. This study used a unique dataset, coming from a hospital in the Netherlands, which had a significantly longer average length of stay than in the US, allowing us to characterize a longer spectrum of the SSI progression process.

Our study is subject to limitations. First, we have only investigated the use of the dynamic wound data for SSI risk prediction. However, it has been found that many preoperative and operative variables are important in predicting the SSI risk. Knowledge of the potential impact of these predictors may help guide physicians with both preoperative and operative decision-makings, and help develop personalized post-discharge care plans and interventions as well. For instance, previous studies have demonstrated that poor nutrition is associated with increased rates of SSI, probably due to a depressed immune system \cite{Malone2002}. Thus, these physiologically impaired patients might benefit from timely and appropriate nutritional support. It has also been found that smokers have higher SSI risk than non-smokers, probably due to the altered perfusion mechanics at the surgical site. There are many other preoperative and operative factors such as age, surgery type, operative time, glucose, length of stay \cite{Haridas2008,Malone2002}, that could significantly increase our capability to predict the SSI risk if being combined with the dynamic wound data. Second, SSI was classified into 3 categories based on the depth of infection \cite{Horan1992}, which include superficial, deep, and organ space SSI. It might be possible that different kinds of SSI would have different risk profiles. Third, many variables that have been found predictive of SSI risk in the literature are not collected in our dataset. For instance, specifics of the operative procedure, such as timing of antibiotics, core body temperature, blood glucose level, are not currently available. On the other hand, we have not included the image-based wound variables, which theoretically can be extracted from the wound images that have been available in the dataset, into our prediction model. The challenge to extract those image-based wound variables from the wound image data includes the enormous variations in the parameters regarding how the images are acquired by the individual patients, i.e., with different angles and different light conditions, etc. Thus, this is also one of our future research directions to extract these image-based wound variables and incorporate them into our prediction model. Finally, although our dataset has longer observation duration than many other datasets studied in the literature, it is still a short period of time considering that the typical follow up period for SSI is thirty days after discharge. This interval of observation is more administratively derived than biologically derived, as SSI could potientilaly occur at longer periods after the surgical event itself.

In summary, in this study, we have developed a systematic treatment of the data challenges associated with the dynamic wound data. The model can predict not only yes-or-no, but also when the SSI will be developed. In addition,  this model has potential use in real-time mHealth monitoring systems that can incorporate patient-reported outcomes and automate image analysis to predict post-discharge SSI. In our future research, we will investigate how to integrate the dynamic wound data with some static risk factors such as some preoperative and operative factors to further improve the SSI risk prediction. We will also develop robust image feature extraction method to extract SSI-related features from the wound image data that is available in our dataset. Also, it is of interest to investigate the course of SSI development and evolution for different types of SSI and surgery procedures. \revise{We notice that previous works such as \cite{Toma2010} raised an interesting question from a conceptual point of view, which is, whether one could generate meaningful features or episodes from variables over time. It is a very interesting question to derive such features or patterns to better understand the characteristics of the underlying health condition. While the pathogenesis and natural history of SSI has been largely unknown, a reasonable assumption is that there is a progression process of SSI that leaves a trail of clinical patterns/features. Through this study, we demonstrate that using dynamic data could improve SSI  prediction. It will be our future work to thoroughly study the dynamics of the clinical signals and its connection/implication of the underlying disease progression process. Last but not least, note that we assumed the prediction model is linear. Despite the appearance as a linear model, proper transformations could be applied to extend the proposed linear model for characterizing potential nonlinearity. On the other hand, if the transformation is needed, we do need to base the linear models on interpretable and meaningful variable transformation. Besides the methodological issues for data fusion and predictive modeling, how to integrate the monitoring tool in real surgical settings is definitely a very important problem. We have studied some important aspects of this problem in our previous work \cite{Patrick2016}. Usability research is also part of our future research directions, e.g., in \cite{Patrick2016} we only focused on a unique post-acute surgical use case that may not perfectly generalize to medical or chronic care settings.}

\section*{Acknowledgments}
Chuyang Ke and Ji Liu are in part supported by the NSF grant CNS-1548078.





\bibliographystyle{unsrt}
\bibliography{reference.bib}







\end{document}